\newcommand{\nosymbol}{}
\newcommand{\tmem}[1]{{\em #1\/}}
\newcommand{\tmop}[1]{\ensuremath{\operatorname{#1}}}
\newcommand{\tmmathbf}[1]{\ensuremath{\boldsymbol{#1}}}
\newcommand{\noplus}{}
\newenvironment{enumeratenumeric}{\begin{enumerate}[1.] }{\end{enumerate}}
\newtheorem{lemma}{Lemma}
\newtheorem{proposition}{Proposition}
\newtheorem{theorem}{Theorem}
\begin{document}


\title{Similar Handwritten Chinese Character Discrimination by Weakly Supervised Learning}

\author{\IEEEauthorblockN{Zhibo Yang\IEEEauthorrefmark{1}\IEEEauthorrefmark{2},
Huanle Xu\IEEEauthorrefmark{2}, Keda Fu\IEEEauthorrefmark{1}, Yong Xia\IEEEauthorrefmark{1}\IEEEauthorrefmark{3}}
\IEEEauthorblockA{
\IEEEauthorrefmark{1}School of Computer Science,
Harbin Institute of Technology\\
\IEEEauthorrefmark{2}Department of Information Engineering,
The Chinese University of Hong Kong\\
\IEEEauthorrefmark{2}\{yz014,xh112\}@ie.cuhk.edu.hk, \IEEEauthorrefmark{3}xiayong@hit.edu.cn}}

\maketitle
\begin{abstract}
Traditional approaches for handwritten Chinese character recognition suffer in classifying similar characters. In this paper, we propose to discriminate similar handwritten Chinese characters by using weakly supervised learning. Our approach learns a \textit{discriminative SVM} for each similar pair which simultaneously localizes the discriminative region of similar character and makes the classification. For the first time, similar handwritten Chinese character recognition (SHCCR) is formulated as an optimization problem extended from SVM. We also propose a novel feature descriptor, Gradient Context, and apply bag-of-words model to represent regions with different scales. In our method, we do not need to select a sized-fixed sub-window to differentiate similar characters. This \enquote{unconstrained} property makes our method well adapted to high variance in the size and position of discriminative regions in similar handwritten Chinese characters. We evaluate our proposed approach over the CASIA Chinese character data set and the results show that our method outperforms the state of the art.
\end{abstract}

\begin{keywords}
Similar handwritten Chinese character recognition, weakly supervised learning, bag-of-words, discriminative SVM
\end{keywords}

\section{Introduction}
As the demand of optical character recognition (OCR) applications has increased tremendously in recent years, the researches on OCR have paid great attention, especially to the unconstrained handwritten Chinese character recognition (HCCR). Many progresses have been achieved in HCCR in recent decades \cite{dai2007chinese,gao2008high,kimura1987modified,liu2007normalization,liu2005pseudo}. However, the experimental results on certain databases still cannot satisfy the requirement of real application or human’s recognition ability in regard of the accuracy and efficiency. Chinese character recognition is a large-scale classification problem which involves more than 5000 frequently-used characters and no less than 10000 for the whole character set, and there are thousands of pairs of similar Chinese characters. Usually, most HCCR systems adopt “one fits all” model, but in fact, the state-of-the-art classifiers used are easily confused at classifying these similar pairs, which pulls down the overall recognition accuracy. Therefore, solving the problem of similar Chinese character recognition will bring potential improvements to the accuracy of handwritten Chinese character recognition.

Similar Chinese characters often share the same radical or differ from each other in some subtle part, such as a stroke or a dot, which we denote as \textit{discriminative region} (DR) in this paper. See Figure \ref{sim_chars} for illustration: \enquote{板} and \enquote{扳} are different in their left radical; \enquote{拨} and \enquote{拔} are different in the upper-left stroke. These similar characters are hard to classify by using existing approaches because they usually employ a global statistical model which may neglect the local detailed features that are discriminative for differentiating those similar pairs. There are also some literatures proposed to address the aforementioned problem. T. F. Gao et al. \cite{gao2007lda}\cite{gao2008high} calculate a complementary distance on a discriminative vector whose discriminability is evaluated by Linear Discriminative Analysis (LDA) and compound it with the output of a baseline classifier to help differentiate similar characters. K.C. Leung and C.H. Leung \cite{leung2010recognition} proposed to use “critical region analysis” technique, which highlights the critical regions by Fisher’s discriminant, to discriminating one character from another similar character. B. Xu et al. \cite{xu2010similar} also proposed to detect the most critical region for each pair of similar characters by using Average Symmetric Uncertainty (ASU). In \cite{shao2011multiple}, the similar Chinese characters recognition is defined as a Multiple-Instance learning problem, Shao et al. employ the Adaboost framework to select the discriminative region for each pair of similar characters. Recently, D. Tao et al. \cite{tao2014similar} introduced the discriminative locality alignment (DLA) approach to discriminate each character from other similar characters.
\begin{figure}
\centering
\includegraphics[width=.43\textwidth]{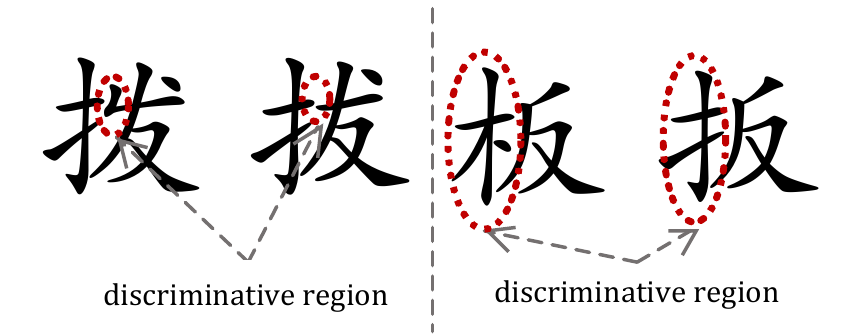}
\caption{Discriminative regions for \enquote{拨} and \enquote{拔}, \enquote{板} and \enquote{扳}}
\label{sim_chars}
\vspace{-2.1 em}
\end{figure}

The methods proposed in \cite{xu2010similar,leung2010recognition} select a sub-window with fixed size and position for each pair of similar characters which is not applicable because the critical region for each pair of handwritten Chinese characters may shift both in position and scale due to the writing style. Shao et al. \cite{shao2011multiple} alleviated this problem by training a weak classifier for each similar pair which adaptively selects the critical region with several predefined window scales and locations of each testing sample \cite{shao2011multiple}. However, the problem is not fully solved; there is still room for further improvement. Moreover, experimental results in \cite{park2000ocr,vamvakas2010handwritten} showed that approaches that employ a hierarchical treatment of patterns have considerable advantages over “one-model-fits-all” approaches, not only improving recognition accuracy, but also reducing the computational cost. So, in this paper, we propose to use two-stage classification strategy where we use MQDF \cite{kimura1987modified}, a successful classifier frequently used in HCCR, as the baseline classifier and learn a SVM as the second-level classifier that jointly localizes the discriminative region of each pair of similar characters and makes the classification. The methods proposed in \cite{xu2010similar,leung2010recognition,shao2011multiple} have to fix the window size because otherwise the feature extracted will be in different dimensionality which is unacceptable for training the classifier. While in our proposed approach, we also remove this constraint on the scale of critical region of each similar pair by introducing a novel SIFT-alike feature to the discrimination of similar Chinese characters.

In the rest of this paper, an overview of the proposed method is given in section 2. Section 3 introduces the proposed SIFT-alike feature descriptor. Section 4 presents the detailed learning algorithm for SHCCR. Finally, Section 5 gives the experimental results and the analysis of the results, and we will come to the conclusion of this paper in section 6.

\section{System Overview}
In this section,we give a brief overview of our proposed HCCR System equipped with SHCCR, see Figure \ref{sys_oview} for the flow chart. The system mainly consists of two components, the MQDF for traditional HCCR and the discriminative SVM classifier for SHCCR. The traditional methods for HCCR are well developed and widely used in many real systems, we refer readers to \cite{dai2007chinese,kimura1987modified,liu2007normalization,liu2008handwritten,liu2005pseudo} for the detailed algorithms. In this paper, we focus on SHCCR.

For the integration of the baseline classifier (MQDF) and the SVM classifier for SHCCR, we propose to
use \textit{logistic regression} $h_\theta(x)$ to do confidence evaluation \cite{liu2005classifier} over the outputs of baseline classifier corresponding to the scores of the top two candidates. Given a testing sample character $d$, MQDF is used to output the scores $(s_1,s_2)$ of the top two candidates $(c_1,c_2)$, where $c_1$ yeilds higher probability to be the true class, i.e., $s_1>s_2$. $(s_1,s_2)$ is then fed to the trained logistic function to output the confidence $h􏰯_\theta(s_1,s_2)$. If the confidence $h􏰯_\theta(s_1,s_2)$ is below an acceptable confidence􏰚 $\sigma$, we will check whether the top two candidates $(c_1,c_2)$ is in our similar characters set. Discriminative SVM will be applied to determine which class $d$ belongs to if a match is found; otherwise, $d$ will be classified as $c_1$ according to the outputs of the baseline classifier.
\begin{figure}
\centering
\includegraphics[width=.45\textwidth]{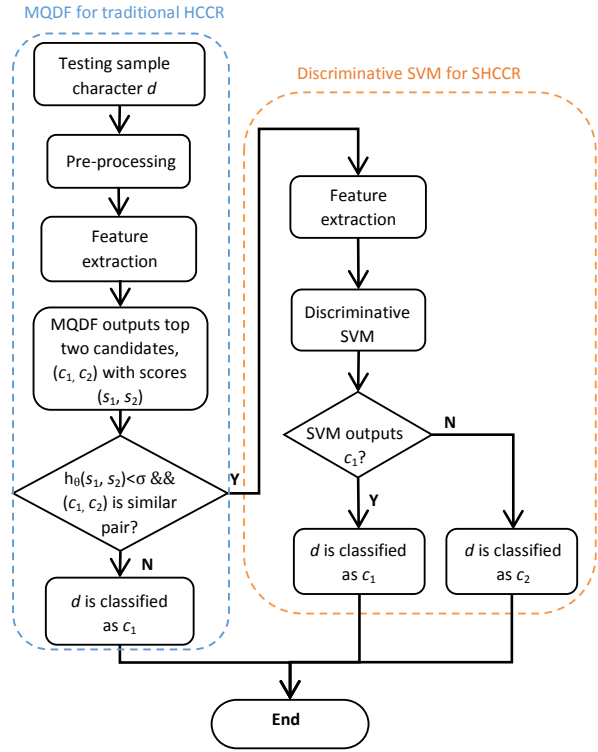}
\caption{System overview}
\label{sys_oview}
\vspace{-2.1 em}
\end{figure}

\section{Feature Construction for SHCCR}
Scale-Invariant Feature Transform (SIFT) \cite{lowe2004distinctive} is a very popular feature extraction strategy widely adopted in many computer vision tasks. However, the original SIFT cannot be directly applied to HCCR, because there are many different writing style in handwritten Chinese characters, which is more complex than variances in rotation and scale. There are also proposed works \cite{rodriguez2008local,zhang2009character} that attempt to modify SIFT and make it available for handwritten character recognition, but they are designed for the global classifier in HCCR. In this section, we present our proposed SIFI-alike feature, \textit{Gradient Context} (GC), specially designed for SHCCR.

Inspired by SIFT and SCIP \cite{nguyen2008symbol}, we sample our seed points from the points in the external contours of each normalized character image and extract the local feature descriptor by employing our proposed Gradient Context (a modified version of Shape Context \cite{belongie2002shape}) and the bag-of-visual-words (BoVW) representation method. Then, we obtain a visual dictionary for each similar pair via using the K-means algorithm to capture all feature descriptors \cite{leung2001representing}. 
Finally, we compute a histogram which counts the number of each codeword in a selected region with random scale. Figure \ref{feat_rep} gives a diagram of our proposed feature extraction procedure.

\subsection{Seeds Selection}
In the view of human perception, people can easily recognize characters as long as they are given the contour of the character image. Thus, a character can be represented by a set of discrete points sampled from the internal or external contours of the character. Therefore, in our approach, we sample the set of seed points $\mathbf{P}=\{p_1,p_2,\cdots,p_n\},p_i\in \mathbb{R}^2$ as locations of the pixels on the external contours of each character detected by an edge detector (e.g. Sobel operator). However, there is no need to keep all pixels on the contour, so as to save computational resources, we can obtain as good approximation to the underlying continuous contour as desired by keeping $n$ to be sufficiently large. In this paper, we select one pixel from every two consecutive pixels on the contour as keypoint. In this way, we can obtain $200\sim 400$ keypoints for each character in the similar pairs.

\begin{figure}
\centering
\includegraphics[width=.43\textwidth]{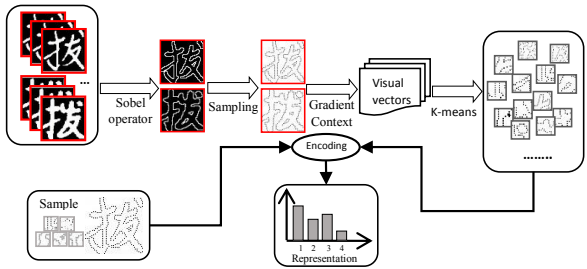}
\caption{Diagram of feature extraction procedure.}
\label{feat_rep}
\vspace{-2.1 em}
\end{figure}

\subsection{Feature Descriptor}
For each seed point $p_i$, we want to calculate a discriminative descriptor that can represent its local feature. Chinese characters are generally composed of strokes representing different directions, thus gradient features are more informative than other features in HCCR. Many experimental results \cite{liu2007normalization,liu2008handwritten} also show that gradient features perform very well in handwritten character recognition. Thus in our approach, we propose to capture the distribution of the gradients in the neighborhood of each seed point via using the log-polar histogram bins \cite{belongie2002shape}. 

For each character, the numerical computation of its gradients is implemented by employing Sobel operator on each pixel of the character image. Then, for each seed point $p_i$(the origin of the log-polar histogram bins), we compute a coarse histogram $h_i$ of the relative gradient distribution of other seed points at a certain neighborhood $\Omega$ parameterized by four radius $\{r_i\}^4_{i=1}$ of $p_i$:
\begin{equation}\label{feat_dspt}
h_i(k)=\sum G_q,\,\,\,\, q-p_i\in bin(k), q\neq p_i,q\in \Omega
\end{equation}
where $G_q$ is the gradient at point $q$. We define histogram $h_i$ as the Gradient Context of point $p_i$. For discriminating similar Chinese characters, the feature extracted should be able to represent its locality. Equation (\ref{feat_dspt}) complies with this principle by making the generated feature focused more on the points nearby than the points far away. Moreover, the size of neighborhood $\Omega$ should not be too large since otherwise the histogram $h_i$ will bring information from outside of the discriminative region which may influence the discriminability of the feature. As Shape Context \cite{belongie2002shape}, we equally divide the neighborhood into 8 panels with the same degree size with regard to their directions, and $4$ spiral bins centering at $p_i$ are used to separate its neighborhood into $4$ pieces with different sizes. As a result, there are $32$ histogram bins in total, which renders the visual vectors to be 32-dimension.

\subsection{Visual Dictionary Learning}
For each pair of similar handwritten Chinese characters, given the collection of the extracted visual vectors from all training samples, we learn a visual dictionary by employing K-means algorithm to cluster all the visual vectors. Experientially, the clusters with a too small number of members should be further pruned out. Each cluster center is defined as a codeword of the learned dictionary. Then each 32-dimension visual vector can be represented as the index of the closest codeword in the learned dictionary, which reduces the computational cost substantially. We consider histogram as a robust and compact, and yet highly discriminative descriptor. Therefore, we use a coarse histogram, which counts the number of the appearance of each codeword in the visual dictionary, to represent the local features in a sub-window with arbitrary size.

\section{Discriminative SVM for SHCCR}
Given a pair of similar character sets, $\mathbf{A}$ and $\mathbf{B}$. In general, $\mathbf{A}$ and $\mathbf{B}$ can be different with each other in two ways. One is the case that $\mathbf{A}$ has some parts (strokes) in presence while character $\mathbf{B}$ has not (e.g. Chinese characters \enquote{玉} and \enquote{王}, \enquote{本} and \enquote{木}), and vice versa. The other case is that there are different radicals or strokes in the same region of both character $\mathbf{A}$ and $\mathbf{B}$. For example, similar Chinese characters \enquote{海} and \enquote{悔} are different in there left radical but share an identical right radical \enquote{每}; \enquote{目} and \enquote{日} differ from each other in their central parts. However, in our approach, we define the second category as a special case of the first one since the fact that \enquote{海} and \enquote{悔} are different in their left radical can also be perceived as \enquote{海} has the radical \enquote{氵} while \enquote{悔} has not. Therefore, for all the similar pairs of Chinese characters, our task is to localize the most discriminative parts which appear in only one of the two similar Chinese characters and distinguish them. Most existing methods \cite{xu2010similar,gao2007lda,leung2010recognition,shao2011multiple} tackle this task within two independent steps: localization of the discriminative region and making decision. However, localization separated from the recognition will cause information loss. So in this paper, we propose to learn a SVM that jointly performs DR localization and classification.

For each pair of similar Chinese character sets, we define the one of them as positive and the other negative. There is no particular rule in deciding which character should be positive or negative, in our approach, you can either define character $\mathbf{A}$ as positive or $\mathbf{B}$ as positive. The two similar characters have equal priority during training. Then our goal is to find a region that exists in the positive class while not in the negative class. This is similar to learning to detect objects given training examples with weakly labeled (binary) data indicating the presence of an object (not location). As our problem is transformed into an object detection problem, many powerful tools can be applied, such as SVM, multiple-instance learning and weakly supervised learning. Inspired by \cite{nguyen2009weakly,hoai2014learning}, we firstly formulate the SHCCR as an optimization problem extened from SVM and propose a subgradient algorithm to solve the problem.

\subsection{SVM Learning}
Given a set of training samples of a pair of similar Chinese characters, let $D^+=\{d^+_1,d^+_2,\cdots,d^+_n$\} and $D^-=\{d^-_1,d^-_2,\cdots,d^-_n\}$ denote the positive set and negative set, respectively, and each sample $d$ is represented by a coarse histogram. Our goal of finding the most discriminative region is equal to learning a SVM with the maximum margin between two classes of data, where the data is represented by the feature extracted from all sub-windows of each training sample from $D^+\cup D^-$. Let 􏴁$\Psi(d_i)$ denotes the set of feature vectors extracted from all possible sub-windows of training sample $d_i$. Then we formulate SHCCR as the following optimization problem:
\begin{eqnarray}
  \underset{\omega, b, \xi}{\tmop{minimize}} \text{ \ } & \frac{1}{2} \| \omega \|^2 + C {\sum_{i=1}^{n+m} \xi_i} \label{opt_obj}\\
  s.t. \text{ \ } & \underset{x \in \Psi (d^+_i)}{\max} \{ \omega^T x + b \}
  \geq 1 - \xi_i \quad 1 \leq i \leq n \label{opt_c1} \\
  & \underset{x \in \Psi (d^-_j)}{\max} \{ \omega^T x + b \} \leq - 1 + \xi_{j+n}
  \   1 \leq j \leq m \label{opt_c2} \\
  & \xi_i \geq 0 \quad  1 \leq i \leq n+m \nonumber
\end{eqnarray}
where $\omega$ is a normal vector with $b$ being the bias, $\xi_i$ is the slack variable that allows some violations in the data during the training and $C$ denotes the trade-off coefficient. The constraints in the above learning objective suggest that there must be a positive sub-window in each sample from the positive set $D􏳾^+$, while all sub-windows in $D􏰊^-$ are supposed to be classified as negative which is similar to the support vector machine for multiple-instance learning \cite{andrews2002support}. By solving the above optimization problem, we obtain a soft-margin SVM classifier which can simultaneously detect the most discriminative region, and at the same time, make the decision.

Given a testing sample $d$, we first localize the discriminative region by finding the feature vector (representing a certain sub-window of $d$) that yields the maximum SVM score:
$$\hat{x}=\arg \underset{x\in \Psi(d)}{\max}(\omega^T x+b)$$ 
Then, the classification is made corresponding to the value of
$\hat{y}=\omega^T \hat{x}+b$.
If $\hat{y}>0$, the testing sample $d$ will be classified as positive; otherwise negative.

\subsection{Solution approach to SVM learning}
In this section, we propose to employ the optimization skills adopted from the Cutting Plane Algorithm as well as the standard convex optimization problem to solve our learning objective (\ref{opt_obj}) whose constrains are generally non-convex.

For the optimization objective (\ref{opt_obj}), the non-convex constrains (\ref{opt_c1}) is difficult to handle. To tackle this issue, we first rewrite the original formulation as the following unconstrained optimization problem:  

\begin{equation}
\label{unconstrained_optimization}
 \underset{{\mu}}{\min} \left\{ f ({\mu}) + C
   \overset{n}{\underset{i = 1}{\sum}} g_i^{} ({\mu}) + C
   \overset{m}{\underset{j = 1}{\sum}} h^{}_j ({\mu}) \right\} 
 \end{equation}

where $\mu = (\omega, b)$ and 
\begin{eqnarray*}
& f ({\mu}) = \frac{1}{2} \| \omega \|^2 & \\
  & g_i^{} ({\mu}) = \max \left\{ - 1, \underset{x \in \Psi
  (d^+_i)}{\min} \phi_{}^{} (x, {\mu}) \right\}; \ \phi_{}^{} (x, {\mu}) = - \omega^T x - b & \\
  & h^{}_j ({\mu})^{} = \max \left\{ - 1, \underset{y \in \Psi
  (d^-_j)}{\max} \varphi (y, {\mu}) \right\}; \ \varphi (y, {\mu}) =
  \omega^T y + b & 
\end{eqnarray*}


Our strategy is to convert the complex learning objective (\ref{opt_obj}) to the traditional SVM optimization problem which can be solved using the well-developed tools. This can be achieved with the help from the following lemma. 

\begin{proposition}
  Let $\{ f_i \}_{i \in I}$ be an arbitrary family of convex functions on
  {\tmem{}}$\mathbbm{R}^n$. Then, the pointwise supremum $f = \sup_{i \in I}
  f_i$ is convex. 
  \label{proposition_1}
\end{proposition}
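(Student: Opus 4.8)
The plan is to verify convexity directly from its defining inequality, since the pointwise supremum interacts cleanly with the convexity condition. First I would fix arbitrary points $x, y \in \mathbb{R}^n$ and a scalar $\lambda \in [0, 1]$; the goal is then to establish $f(\lambda x + (1 - \lambda) y) \leq \lambda f(x) + (1 - \lambda) f(y)$, where $f = \sup_{i \in I} f_i$.

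The key step exploits the fact that each individual member of the family already satisfies the inequality. For any fixed index $i \in I$, convexity of $f_i$ gives $f_i(\lambda x + (1 - \lambda) y) \leq \lambda f_i(x) + (1 - \lambda) f_i(y)$. Since $f_i(x) \leq f(x)$ and $f_i(y) \leq f(y)$ by the definition of the supremum, and since $\lambda, 1 - \lambda \geq 0$, the right-hand side is in turn bounded above by $\lambda f(x) + (1 - \lambda) f(y)$. Thus $f_i(\lambda x + (1 - \lambda) y) \leq \lambda f(x) + (1 - \lambda) f(y)$ holds for every $i$, with a bound that no longer depends on $i$.

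Having secured a uniform upper bound, I would then take the supremum over $i \in I$ on the left-hand side. Because $\lambda f(x) + (1 - \lambda) f(y)$ is an upper bound for the entire family $\{ f_i(\lambda x + (1 - \lambda) y) \}_{i \in I}$, the least upper bound respects it as well, yielding exactly $f(\lambda x + (1 - \lambda) y) \leq \lambda f(x) + (1 - \lambda) f(y)$, which is the required convexity inequality.

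The argument is essentially routine, so there is no serious obstacle; the only point needing a moment of care is the possibility that the supremum equals $+\infty$ at some point, in which case the inequality must be interpreted in the extended reals. An alternative and equally clean route sidesteps this entirely via epigraphs: one observes that $\mathrm{epi}(f) = \bigcap_{i \in I} \mathrm{epi}(f_i)$, that each $\mathrm{epi}(f_i)$ is convex by convexity of $f_i$, and that an arbitrary intersection of convex sets is convex, whence $f$ is convex. I would present the direct computation as the main proof and mention the epigraph characterization as a conceptual confirmation.
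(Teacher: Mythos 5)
Your proof is correct. Note, however, that the paper itself offers no argument at all for this proposition: its ``proof'' is a single sentence deferring to the standard convex optimization textbook, so any self-contained derivation is by definition a different route. Your direct verification --- fix $x,y$ and $\lambda \in [0,1]$, bound $f_i(\lambda x + (1-\lambda)y)$ by $\lambda f(x) + (1-\lambda) f(y)$ uniformly in $i$, then pass to the supremum on the left --- is exactly the standard argument, and the epigraph identity $\mathrm{epi}(f) = \bigcap_{i \in I} \mathrm{epi}(f_i)$ that you mention as a confirmation is in fact the argument given in the cited reference itself. So your write-up effectively supplies what the paper outsources. Two small remarks: your caution about the supremum being $+\infty$ is the right instinct for an arbitrary index family, though it is vacuous in the paper's actual use case (Lemma 1 applies the proposition to a \emph{finite} family --- the constant $-1$ together with $\varphi(y,\mu)$ over the finitely many sub-windows $y \in \Psi(d^-_j)$ --- so the supremum is an attained, finite maximum); and for that application the relevant consequence is precisely that a pointwise maximum of affine functions of $\mu$ is convex, which your argument covers since affine functions are convex.
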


\begin{proof}
  Refer to \cite{convex_optimization} for the detailed proof. 
\end{proof}

\begin{lemma}
\label{lemma_1}
  The following function
  \begin{equation}
  \label{function_of_mu}
   \Phi ({\mu}) = f(\mu) + C\overset{m}{\underset{j =
     1}{\sum}} h^{}_j ({\mu})
  \end{equation}
  is a convex function of ${\mu} \nosymbol$. 
\end{lemma}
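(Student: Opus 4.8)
The plan is to decompose $\Phi$ into elementary pieces, each of which is convex, and then appeal to the standard closure properties of convex functions (convexity is preserved under nonnegative scaling and under summation) together with Proposition \ref{proposition_1}. First I would record that $f(\mu) = \frac{1}{2}\|\omega\|^2$ is convex in $\mu = (\omega, b)$: it is a convex quadratic in the $\omega$-block and does not depend on $b$, so it is convex on the whole of $\mu$-space.

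The substance of the argument is showing that each $h_j$ is convex, and I would do this in two nested applications of Proposition \ref{proposition_1}. For a fixed $y \in \Psi(d^-_j)$, the map $\mu \mapsto \varphi(y, \mu) = \omega^T y + b$ is affine in $\mu$, hence convex. The inner term $\mu \mapsto \max_{y \in \Psi(d^-_j)} \varphi(y, \mu)$ is therefore a pointwise supremum of a (finite) family of convex functions, so Proposition \ref{proposition_1} gives its convexity. Next I would regard $h_j(\mu) = \max\{-1, \max_{y} \varphi(y, \mu)\}$ as the pointwise supremum over the two-element family consisting of the constant function $-1$ (trivially convex) and the convex inner maximum just obtained; a second invocation of Proposition \ref{proposition_1} then yields that $h_j$ is convex.

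Finally, since $C > 0$, the expression $\Phi = f + C \sum_{j=1}^{m} h_j$ is a nonnegative linear combination of the convex functions $f, h_1, \dots, h_m$, and such combinations are again convex, completing the proof. The only point requiring care — and it is a mild one — is the bookkeeping of the two maxima defining $h_j$: one must identify the inner $\max$ as a supremum of \emph{affine} functions of $\mu$ (so Proposition \ref{proposition_1} applies) and the outer $\max$ as a supremum over the pair $\{-1, \max_y \varphi(y, \mu)\}$ (so the proposition applies a second time). I expect no genuine obstacle; everything reduces to these two applications of the supremum rule plus the additivity of convexity.
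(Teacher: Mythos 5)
Your proof is correct and follows essentially the same route as the paper's: observe that $\varphi(y,\mu)$ is affine in $\mu$, apply Proposition \ref{proposition_1} to conclude each $h_j$ is convex, note $f$ is convex, and sum. You are merely more explicit than the paper about the two nested invocations of the supremum rule and the closure of convexity under nonnegative linear combinations, which the paper leaves implicit.
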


\begin{proof}
  Firstly, observe that $\varphi (y, {\mu}) =
  \omega^T y + b$ is a linear function, applying Proposition \ref{proposition_1} here, it's straightforward to show that $h^{}_j ({\mu})^{} = \max \bigg\{ - 1, \underset{y \in \Psi(d^-_j)}{\max} \varphi (y, {\mu}) \bigg\}$ is convex. Secondly,  $f(\mu)$ is a convex function of $\mu$. This completes the proof. 
\end{proof}

Lemma \ref{lemma_1} indicates that if we fix the sub-window for each positive sample in constrains (\ref{opt_c1}), the formulation in \eqref{unconstrained_optimization} becomes a convex optimization problem. Therefore, we first use the Cutting Plane Algorithm (also used in multiclass maximum margin clustering by Zhao et al. (Zhao et al., 2008)) to select the most violated feature vector extracted from the sub-windows of each positive sample at every iteration. See Algorithm \ref{algorithm_outer} for the detailed implementation.
\vspace{-0.5em}
\begin{algorithm}
 \label{algorithm_outer}
	 \KwData{$D^+$ and $D^-$}
	 \KwResult{$\omega$ and $b$}
	 Initialize $\Omega=\varnothing$\;
	 \Repeat {$tv<\tau$}{
	 	$tv=0$ \;
	 	\For{$1\le i \le n$}{
	 		Select the most violated feature:
	 		\begin{equation}
	 			\hat{x}_i =\arg \underset{x\in \Psi(d_i^+)}{\max}(\omega^T x+b)
	 		\end{equation}
	 		$\Omega = \Omega \cup \{\hat{x}_i\}$ \\
	 	}
	 	\label{K-means-type}
	 	Optimize for SVM with constraints (\ref{opt_c2}):
	 	\begin{eqnarray} \label{opt_ccp}
		  &(\omega,b,\xi)=\arg \min\big(\frac{1}{2} \| \omega \|^2 + C {\sum_{i=1}^{n+m} \xi_i}\big)\\
		  s.t. 
		  &  \omega^T x + b  \geq 1 - \xi_i
		  \quad x\in \Omega, 1 \leq i \leq n \nonumber \\
		  & \underset{x \in \Psi (d^-_j)}{\max} \{ \omega^T x + b \}
		  \leq -1 + \xi_{j+n}
		  \quad 1 \leq j \leq m \nonumber\\
		  & \xi_i \geq 0 
		  \quad 1 \leq i \leq n+m \nonumber
		\end{eqnarray}
		\label{optimization_algorithm_convex}
	 }
 \caption{Cutting Plane Algorithm for Constrains (\ref{opt_c1})}
\end{algorithm} 


As is illustrated in Algorithm \ref{algorithm_outer}, we make the large constraint set (\ref{opt_c1}) to be size-manageable by adding the most violated constraint for each positive training sample. This algorithm stops when the total violation at one iteration is smaller than an acceptable precision􏴑. The following theorem guarantees that Algorithm \ref{algorithm_outer} converges to a local optimum of our learning objective \eqref{opt_obj}.

\newcounter{mytempeqncnt}
\begin{figure*}[ht]
\normalsize
\setcounter{mytempeqncnt}{\value{equation}}
\setcounter{equation}{9}
\begin{eqnarray}
  \label{subgradient_1}
   \frac{\partial \Gamma ({\mu})}{\partial {\mu}} & = \text{}
   \left(\begin{array}{c}
     \omega^{\ast}\\
     0
   \end{array}\right) + C \text{} \sum_{i = 1}^n \left(\begin{array}{c}
     - \hat{x}_i\\
     - 1
   \end{array}\right) \tmmathbf{1}_{\phi (\hat{x}_i, {\mu}) \geqslant -
   1} + C\sum_{j = 1}^m \left(\begin{array}{c}
     y^{_{} \ast}_j\\
     1
   \end{array}\right) \tmmathbf{1}_{\varphi (y^{\ast}_j, {\mu}) \geqslant
   - 1}\\
   \label{optimal_y}
   & y_{j}^{\ast} = \arg
  \underset{y \in \Psi (d^-_j)}{\max} \{\omega^T y + b\}
  \end{eqnarray}
\setcounter{equation}{\value{mytempeqncnt}}
\hrulefill
\vspace{-.5em}
\end{figure*}

\begin{theorem}
\label{convergence_theorem}
Algorithm \ref{algorithm_outer} stops within finite number of iterations and converges to a local optimum of the learning objective \eqref{opt_obj}.
\end{theorem}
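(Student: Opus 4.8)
The plan is to recast Algorithm~\ref{algorithm_outer} as an alternating (block-coordinate) minimization of a single auxiliary objective and then close the argument with a majorization–minimization estimate. First I would record that the constrained problem \eqref{opt_obj} is equivalent to the unconstrained minimization of $J(\mu) = f(\mu) + C\sum_{i=1}^{n} g_i(\mu) + C\sum_{j=1}^{m} h_j(\mu)$ appearing in \eqref{unconstrained_optimization}: eliminating the slacks optimally gives $\xi_i = \max\{0, 1 + \min_{x}\phi(x,\mu)\} = 1 + g_i(\mu)$ for positive samples and $\xi_{n+j} = 1 + h_j(\mu)$ for negative ones, so the objective of \eqref{opt_obj} equals $J(\mu) + C(n+m)$. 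The two problems thus share the same local minimizers, and it suffices to analyze $\min_\mu J(\mu)$.

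Next I would introduce explicit selection variables $S = (x_1,\dots,x_n)$ with $x_i \in \Psi(d_i^+)$ and set $\widetilde J(\mu, S) = f(\mu) + C\sum_i \max\{-1, \phi(x_i,\mu)\} + C\sum_j h_j(\mu)$. Each summand $\max\{-1, \phi(x_i,\cdot)\}$ is the pointwise maximum of a constant and an affine function, hence convex, so by Lemma~\ref{lemma_1} the map $\widetilde J(\cdot, S)$ is convex in $\mu$ for every fixed $S$; this is precisely the SVM subproblem \eqref{opt_ccp}. Because $\max\{-1,\cdot\}$ is nondecreasing, $\min_{x_i}\max\{-1,\phi(x_i,\mu)\} = \max\{-1,\min_{x_i}\phi(x_i,\mu)\} = g_i(\mu)$, so $J(\mu) = \min_S \widetilde J(\mu, S)$. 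With this identity, Algorithm~\ref{algorithm_outer} is exactly alternating minimization: the selection step computes $S^{(t+1)} = \arg\min_S \widetilde J(\mu^{(t)}, S)$ (the most violated $\hat x_i = \arg\max_{x}\{\omega^T x + b\}$, i.e. the minimizer of $\phi$), and the convex step computes $\mu^{(t+1)} = \arg\min_\mu \widetilde J(\mu, S^{(t+1)})$.

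From here monotonicity and finite termination are routine. Writing $W^{(t)} := \min_\mu \widetilde J(\mu, S^{(t)}) = \widetilde J(\mu^{(t)}, S^{(t)})$, the chain $W^{(t+1)} = \min_\mu \widetilde J(\mu, S^{(t+1)}) \le \widetilde J(\mu^{(t)}, S^{(t+1)}) = \min_S \widetilde J(\mu^{(t)}, S) \le \widetilde J(\mu^{(t)}, S^{(t)}) = W^{(t)}$ shows $W^{(t)}$ is nonincreasing. Since each $\Psi(d_i^+)$ is finite there are only finitely many selections $S$, so $W^{(t)}$ lives in the finite value set $\{\min_\mu \widetilde J(\mu, S)\}_S$; a nonincreasing sequence in a finite set is eventually constant. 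When $W^{(t+1)} = W^{(t)}$ every inequality above is an equality, so $S^{(t)}$ already minimizes the selection at $\mu^{(t)}$ and no newly selected sub-window is strictly violated, i.e. the total violation $tv$ falls below $\tau$ and the loop halts, giving termination in finitely many iterations.

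Finally, for local optimality I would exploit the majorization structure. At a terminal fixed point $(\mu^\ast, S^\ast)$, $\mu^\ast$ globally minimizes the convex surrogate $\widetilde J(\cdot, S^\ast)$, while $J \le \widetilde J(\cdot, S^\ast)$ everywhere with equality at $\mu^\ast$. On any neighborhood of $\mu^\ast$ where the minimizing sub-window for each positive sample remains $x_i^\ast$, the surrogate coincides with $J$, so $J(\mu) = \widetilde J(\mu, S^\ast) \ge \widetilde J(\mu^\ast, S^\ast) = J(\mu^\ast)$, proving $\mu^\ast$ is a local minimizer of $J$ and hence a local optimum of \eqref{opt_obj}. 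The main obstacle is exactly this localization step: it requires the active selection $\hat x_i(\mu) = \arg\min_{x}\phi(x,\mu)$ to be locally constant at $\mu^\ast$. This holds whenever the maximizing sub-window is unique with a strict margin to the runner-up (then continuity of the affine $\phi$ keeps it the minimizer nearby), but can fail on the degenerate set where several sub-windows attain the extremum. I would dispose of this by breaking ties consistently or perturbing $\mu^\ast$ so the surrogate stays tight on a full neighborhood, which is the only delicate point in the argument.
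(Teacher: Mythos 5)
Your proposal is correct, and its core coincides with the paper's own argument: the paper's proof is precisely the two-inequality chain showing that the most-violated-feature selection step and the convex SVM step of Algorithm~\ref{algorithm_outer} each can only decrease a common objective value, i.e.\ $obj(k)\le obj(k-1)$, which is your block-coordinate monotonicity $W^{(t+1)}\le W^{(t)}$ in different notation. Where you genuinely depart from the paper is that you actually prove the two things the theorem asserts. The paper stops at ``non-increasing and bounded below by $0$,'' which only gives convergence of the objective \emph{values}; it proves neither termination in finitely many iterations nor that the limit point is a local optimum of \eqref{opt_obj}. You supply both missing pieces: finite termination follows from your observation that the selection variables $S$ range over a finite set (finitely many sub-windows per sample), so the nonincreasing sequence $W^{(t)}$ takes values in a finite set and must stabilize, triggering the stopping rule; and local optimality follows from your majorization argument that $J\le\widetilde J(\cdot,S^\ast)$ with equality at $\mu^\ast$, together with tightness of the surrogate on a neighborhood where the active sub-window selection is unchanged. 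The degeneracy caveat you flag is real and is silently present in the paper as well: if several sub-windows tie for the extremum at $\mu^\ast$, a fixed point of the algorithm need not be a local minimizer, so the theorem implicitly requires nondegeneracy or a consistent tie-breaking rule. In short, your proof subsumes the paper's and repairs its two unproven assertions; the paper's version is shorter but, taken literally, establishes only monotone convergence of objective values.
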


Refer to Appendix \ref{proof-theorem-1} for the detailed proof. We proceed to solve the convex optimization problem \eqref{opt_ccp} with constrains (\ref{opt_c2}) under the current constraint set 􏰉at each iteration. Similarly, \eqref{opt_ccp} is equivalent to minimizing the following unconstrained optimization problem over $\mu$:

\begin{equation}
  \label{function_of_mu}
   \Gamma ({\mu}) = f(\mu) + C\overset{n}{\underset{i = 1}{\sum}} \max \{ - 1,
     \phi_{}^{} (\hat{x}_i, {\mu}) \} + C\overset{m}{\underset{j =
     1}{\sum}} h^{}_j ({\mu})
 \end{equation}
Based on Lemma \ref{lemma_1}, $\Gamma ({\mu})$ is a convex function of $\mu$. However, $\Gamma(\mu)$ is not differentiable in general and thus the traditional gradient descent approach is not applicable to this problem. Hence, we adopt the subgradient method to handle it. To begin with, we derive the subgradient of $\Gamma(\mu)$ via the following lemma: 

\begin{lemma}
\label{lemma_2}
  The subgradient of \ $\Gamma(\mu)$ is determined by Equation \eqref{subgradient_1} and Equation \eqref{optimal_y} where $\tmmathbf{1}_{A}$ is equal to 1 if condition A holds and 0 otherwise.
\end{lemma}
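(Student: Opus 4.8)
The plan is to exploit the additive structure of $\Gamma$ together with the sum rule for subdifferentials. Since $f$, each $\max\{-1,\phi(\hat{x}_i,\cdot)\}$, and each $h_j$ are everywhere-finite convex functions on the ambient space of $\mu=(\omega,b)$, the subdifferential of the sum equals the sum of the subdifferentials, so it suffices to produce one valid subgradient for each term and add them with the appropriate factor of $C$. Because subgradients need not be unique at kinks, I only have to exhibit a single element of $\partial\Gamma(\mu)$, which frees me to make convenient selections wherever $\Gamma$ fails to be differentiable.

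First I would dispatch the smooth term: $f(\mu)=\frac{1}{2}\|\omega\|^2$ does not depend on $b$, so its gradient with respect to $\mu=(\omega,b)$ is exactly $(\omega,0)^{T}$, which is the first summand in \eqref{subgradient_1}. Next, for each positive sample I treat $\max\{-1,\phi(\hat{x}_i,\mu)\}$ as the pointwise maximum of the constant $-1$ and the affine map $\phi(\hat{x}_i,\mu)=-\omega^T\hat{x}_i-b$. Using the rule that the subdifferential of a pointwise maximum is the convex hull of the gradients of the currently active pieces, whenever $\phi(\hat{x}_i,\mu)\geqslant -1$ the affine piece is active and $\nabla_\mu\phi(\hat{x}_i,\mu)=(-\hat{x}_i,-1)^{T}$ is a valid subgradient, while if the constant dominates the subgradient is $0$; the indicator $\mathbf{1}_{\phi(\hat{x}_i,\mu)\geqslant -1}$ records exactly this selection.

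The crux is the negative-sample term $h_j(\mu)=\max\{-1,\max_{y\in\Psi(d_j^-)}\varphi(y,\mu)\}$, which is a nested maximum. I would first observe that the inner map $\mu\mapsto\max_{y\in\Psi(d_j^-)}\varphi(y,\mu)$ is a pointwise supremum of the linear functions $\varphi(y,\cdot)$ and hence convex by Proposition \ref{proposition_1}. Appealing to the subdifferential-of-a-max principle (Danskin's theorem), a subgradient of this inner map at $\mu$ is the gradient of the active branch evaluated at the maximizer $y_j^{\ast}=\arg\max_{y\in\Psi(d_j^-)}\{\omega^T y+b\}$, namely $\nabla_\mu\varphi(y_j^{\ast},\mu)=(y_j^{\ast},1)^{T}$; this is precisely where Equation \eqref{optimal_y} enters. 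Composing with the outer $\max\{-1,\cdot\}$ exactly as before, this subgradient is selected whenever $\varphi(y_j^{\ast},\mu)\geqslant -1$ and is replaced by $0$ otherwise, which is encoded by $\mathbf{1}_{\varphi(y_j^{\ast},\mu)\geqslant -1}$. Adding the three contributions with the factor $C$ reproduces \eqref{subgradient_1} and completes the argument.

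The step I expect to be delicate is the nested maximum in $h_j$: one must justify differentiating through the inner supremum and confirm that an arbitrary tie-breaking choice of the maximizer $y_j^{\ast}$ still yields a legitimate subgradient, and then correctly compose this with the outer max at the kink $\varphi(y_j^{\ast},\mu)=-1$. The mitigating observation is that the set $\Psi(d_j^-)$ of sub-windows is finite, so the inner supremum is in fact a maximum over finitely many linear functions; the active set is then nonempty and the finite-max rule applies without any compactness subtleties, which keeps the whole argument elementary.
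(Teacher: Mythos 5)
Your proof is correct and arrives at exactly the subgradient selection of Equations \eqref{subgradient_1}--\eqref{optimal_y}, with the same resolution of the kinks via the indicators, but it gets there by a more modular route than the paper. The paper proves the key fact about $h_j$ bare-handed: it fixes $y_j^{\ast}=\arg\max_{y\in\Psi(d_j^-)}\{\omega^T y+b\}$ and verifies the defining inequality of a subgradient directly, namely that for every $\mu'$ one has $h_j(\mu')\geq (\omega')^T y_j^{\ast}+b' = h_j(\mu)+\langle (y_j^{\ast},1),\,\mu'-\mu\rangle$ whenever the inner maximum is at least $-1$ (and trivially the zero subgradient otherwise); this two-line computation handles the nested maximum in one shot and is, in effect, an inline proof of precisely the special case of Danskin/finite-max that you cite, so the paper needs no external machinery. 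You instead assemble the result compositionally from named rules: the sum rule for subdifferentials, the finite-max (active-piece) rule for the outer $\max\{-1,\cdot\}$, and Danskin's theorem for the inner maximum over the finite set $\Psi(d_j^-)$. Your version is more explicit about the pieces the paper dismisses with \enquote{we only need to deal with the last term} (the quadratic term and the positive-sample terms), and it usefully flags that arbitrary tie-breaking in the choice of $y_j^{\ast}$ is harmless; the paper's version buys self-containedness and brevity. One small remark: since the goal is only to exhibit a single element of $\partial\Gamma(\mu)$, you do not need the full sum-rule equality (Moreau--Rockafellar) that you invoke --- the elementary inclusion $\sum_i\partial f_i(\mu)\subseteq\partial\bigl(\sum_i f_i\bigr)(\mu)$, which holds with no qualification conditions, already suffices.
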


Refer to Appendix \ref{proof-lemma-2} for the detailed proof. Following Lemma \ref{lemma_2}, we design the subgradient descent algorithm below to solve the optimization problem \eqref{opt_ccp}, where $\alpha_k$ is the stepsize. We design the step size in the theorem below to guarantee the convergence of Algorithm \ref{algorithm_2}. 

\vspace{-0.5em}
\begin{algorithm}
\label{algorithm_2}
\begin{enumeratenumeric}
  \item Initialize $k=1$ and $\mu_k = \mu$;

  \item Repeat the following steps until convergence;
  
  \item In the $k$th step, Fix ${\mu_k}$, optimize over $\phi$ and \ ${y_{j,k}^{\ast}} = \arg
  \underset{y\in \Psi (d^-_j)}{\min} \phi_{}^{} (y, {\mu_k})$, for $j = 1,
  \ldots, m$;
  
  \item Update $\mu_{k+1}$ based on the following equation:
  $$
  \mu_{k+1} = \mu_{k} + \alpha_k \frac{\partial \Gamma ({\mu})}{\partial {\mu}}\Big|_{\mu = \mu_{k},y_j={y_{j,k}^{\ast}}}
  $$
\end{enumeratenumeric}

\caption{Subgradient Descent Algorithm}
\end{algorithm}
\vspace{-1em}

\begin{theorem}
\label{theorem_stepsize}
By choosing $\alpha_k = \frac{1}{k}$, the sequences generated by Algorithm \ref{algorithm_2} converge to the global optimal solution to the optimization problem \eqref{opt_ccp}. 
\end{theorem}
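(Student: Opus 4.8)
The plan is to treat Algorithm \ref{algorithm_2} as a textbook subgradient method for the convex function $\Gamma(\mu)$ and to exploit the classical fact that the step sizes $\alpha_k = 1/k$ are square-summable but not summable. By Lemma \ref{lemma_1}, $\Gamma$ is convex; moreover the quadratic term $\frac12\|\omega\|^2$ together with the hinge terms makes $\Gamma$ coercive (as $b\to\pm\infty$ either some $h_j$ or some $\max\{-1,\phi(\hat x_i,\mu)\}$ grows without bound), so a global minimizer $\mu^*$ exists and $\Gamma^* := \Gamma(\mu^*)$ is finite. First I would fix notation: write $g_k$ for the subgradient of $\Gamma$ at $\mu_k$ supplied by Lemma \ref{lemma_2} through Equation \eqref{subgradient_1}, so that the update of Algorithm \ref{algorithm_2} reads $\mu_{k+1} = \mu_k - \alpha_k g_k$, a step along the negative subgradient.

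The core of the argument is the standard one-step estimate. I would expand
\[
\|\mu_{k+1}-\mu^*\|^2 = \|\mu_k-\mu^*\|^2 - 2\alpha_k\, g_k^T(\mu_k-\mu^*) + \alpha_k^2\|g_k\|^2,
\]
and then invoke the defining subgradient inequality $g_k^T(\mu_k-\mu^*) \ge \Gamma(\mu_k) - \Gamma^*$, which yields
\[
\|\mu_{k+1}-\mu^*\|^2 \le \|\mu_k-\mu^*\|^2 - 2\alpha_k\bigl(\Gamma(\mu_k)-\Gamma^*\bigr) + \alpha_k^2\|g_k\|^2 .
\]
Telescoping from $k=1$ to $K$ and discarding the nonnegative left-hand side gives the fundamental bound
\[
2\sum_{k=1}^K \alpha_k\bigl(\Gamma(\mu_k)-\Gamma^*\bigr) \le \|\mu_1-\mu^*\|^2 + G^2\sum_{k=1}^K \alpha_k^2 ,
\]
where $G := \sup_k\|g_k\|$. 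Replacing each summand on the left by the running best value $\Gamma_{\mathrm{best}}^{(K)} := \min_{1\le k\le K}\Gamma(\mu_k)$ and rearranging produces
\[
\Gamma_{\mathrm{best}}^{(K)} - \Gamma^* \le \frac{\|\mu_1-\mu^*\|^2 + G^2\sum_{k=1}^K \alpha_k^2}{2\sum_{k=1}^K \alpha_k} .
\]

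With $\alpha_k = 1/k$ the numerator stays bounded, since $\sum_{k=1}^\infty 1/k^2 = \pi^2/6 < \infty$, while the denominator diverges because $\sum_{k=1}^K 1/k \to \infty$. Hence the right-hand side tends to $0$, which shows $\Gamma_{\mathrm{best}}^{(K)} \to \Gamma^*$, i.e.\ the iterates drive the objective to its global minimum and solve \eqref{opt_ccp}. Because $\frac12\|\omega\|^2$ makes $\Gamma$ strongly convex in the $\omega$-block, the minimizing $\omega^*$ is unique and objective convergence upgrades to $\omega_k \to \omega^*$, with convergence of the full iterate following from the piecewise-linear structure of the remaining terms.

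The step I expect to be the main obstacle is justifying the uniform subgradient bound $G = \sup_k\|g_k\| < \infty$, which the clean telescoping silently assumes. Reading off Equation \eqref{subgradient_1}, $g_k$ is the sum of bounded data terms — the feature vectors $\hat x_i, y_j^*$ are codeword histograms and hence lie in a fixed bounded set, and the indicator coefficients lie in $\{0,1\}$ — plus the term $(\omega_k,0)^T$ coming from $\nabla f$, which is \emph{not} bounded a priori. To close this gap I would show the iterates stay in a compact set: using coercivity of $\Gamma$ together with the descent estimate above, a fixed sublevel set of $\Gamma$ is bounded and, once the finitely many early (large-$\alpha_k$) steps are accounted for, traps the sequence $\{\mu_k\}$, so $\{\omega_k\}$ is bounded and $G$ is finite. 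Carrying out this boundedness bootstrap carefully — rather than merely positing $G$ — is the one place where the strong-convexity and coercivity structure of the SVM objective must genuinely be used.
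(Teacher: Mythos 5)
Your proof takes a genuinely different route from the paper's. The paper's entire argument is a citation: it checks that $\alpha_k = 1/k$ satisfies $\sum_k \alpha_k = \infty$ and $\sum_k \alpha_k^2 < \infty$, and then invokes Proposition 8.2.6 of \cite{convex-analysis} (the classical diminishing-stepsize convergence theorem for subgradient methods) to conclude that $\{\mu_k\}$ converges to a minimizer of $\Gamma$, hence to a global optimum of \eqref{opt_ccp}. You instead reprove that classical theorem from scratch: the one-step expansion of $\|\mu_{k+1}-\mu^*\|^2$, the subgradient inequality, telescoping, and square-summability of $1/k$. (You also silently correct what is evidently a sign typo: Algorithm \ref{algorithm_2} as printed updates $\mu_{k+1} = \mu_k + \alpha_k\,\partial\Gamma/\partial\mu$, whereas minimization requires stepping along the negative subgradient, as you do.) What your route buys is transparency: it exposes the hypothesis that the paper's citation quietly assumes but never verifies, namely a uniform bound $G$ on the subgradient norms --- the cited proposition is stated under exactly such a boundedness assumption, and Equation \eqref{subgradient_1} contains the term $(\omega_k,0)^T$, which is not bounded a priori. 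You are right that this is the crux.

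However, your plan for closing that gap would fail as written, and this is a genuine gap. You propose to bound the iterates by coercivity plus sublevel-set trapping (\enquote{a fixed sublevel set of $\Gamma$ \ldots traps the sequence}). The subgradient method is not a descent method: $\Gamma(\mu_{k+1})$ can exceed $\Gamma(\mu_k)$ at any step, so nothing confines $\{\mu_k\}$ to a sublevel set, and coercivity alone bounds nothing. The correct repair uses the structure you already observed: the data terms in \eqref{subgradient_1} are bounded, so $\|g_k\| \le \|\mu_k-\mu^*\| + B$ for a constant $B$, and the one-step estimate becomes
\begin{equation*}
\|\mu_{k+1}-\mu^*\|^2 \le \left(1+2\alpha_k^2\right)\|\mu_k-\mu^*\|^2 - 2\alpha_k\left(\Gamma(\mu_k)-\Gamma^*\right) + 2\alpha_k^2 B^2 .
\end{equation*}
Since $\sum_k \alpha_k^2 < \infty$, the standard lemma on recursions $a_{k+1}\le(1+b_k)a_k+c_k$ with $\sum_k b_k<\infty$, $\sum_k c_k<\infty$ shows that $\|\mu_k-\mu^*\|$ converges; in particular $\{\mu_k\}$ is bounded and $G<\infty$, with no descent property needed. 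A second, smaller gap: your telescoping yields only $\Gamma^{(K)}_{\mathrm{best}}\to\Gamma^*$, whereas the theorem asserts convergence of the sequence itself to an optimizer; strong convexity in $\omega$ controls only the $\omega$-components of the best iterates, and the appeal to \enquote{piecewise-linear structure} says nothing about $b_k$ (the optimal $b$ need not even be unique). The finish is Fej\'er-type: boundedness plus $\liminf_k\Gamma(\mu_k)=\Gamma^*$ yield a limit point $\bar\mu$ that is optimal, and since $\|\mu_k-\bar\mu\|$ converges (the same recursion lemma applied with $\mu^*=\bar\mu$), the entire sequence converges to $\bar\mu$. With these two repairs your argument is a complete, self-contained proof of the theorem and of the proposition the paper only cites.
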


Refer to Appendix \ref{proof_convergence} for the detailed proof of Theorem \ref{theorem_stepsize}. Even though our optimization approach converges, the algorithm still needs to search for the feature vector that maximizes the SVM score from all possible sub-windows of each training sample iteratively. Therefore, the algorithm requires a very fast optimal region localization procedure. In our approach, we propose two strategies to expedite the localization process: one is employing an efficient sub-window searching algorithm proposed by Lampert et al. \cite{lampert2009efficient}; the other is by predefining a sliding step and several scale templates corresponding to the structures of Chinese characters. According to our experimental results, the second strategy is much more efficient and achieves nearly as good performance compared to the first one. Moreover, \textit{Integral Image} can also be used to speed up the search.

\begin{table*}[t] 
	\begin{center} 
	\caption{Recognition accuracy of the proposed method with different $\sigma$}
	\begin{tabular}{p{2cm}|*{10}{p{1cm}}}
	 \hline
	 & 0.7 & 0.8 & 0.9 & 0.92 & 0.94 & 0.95 & 0.96 & 0.97 & 0.98 & 1.00\\ 
	 \hline
	 Accuracy (\%) & 97.90 & 97.95 & 98.13 & 98.22 & 98.28 & 98.28 & 98.29 & 98.25 & 98.19 & 98.05\\ 
	 \hline
	\end{tabular}
	\end{center}
	\label{table_para}
	\vspace{-1.8em}
\end{table*}

\section{Experiments}

To evaluate the effectiveness of our proposed algorithms for SHCCR, we first compare the our approach with the traditional MQDF. Then, we conduct another group of experiments to compare the discriminative SVM for SHCCR with two baselines, which are the critical region selection by Average Symmetric Uncertainty (ASU) \cite{xu2010similar} and by Multiple-Instance Learning (MIL) \cite{shao2011multiple} respectively. The comparison are made based on the same set of similar Chinese characters under our Logistic Regression (LR)-based framework. All of our experiments are conducted over a large handwritten Chinese character database, CASIA, which contains binary images of 3,755 frequently used characters (in level-1 set of GB2312-80), and there are 300 samples in each class.

\subsection{MQDF Training}
In our approach, 270 out of 300 samples in each class are randomly selected to train MQDF, the rest of them are used for testing. For each sample image, we employ the pseudo-two-dimension bi-moment normalization method \cite{liu2005pseudo} and the normalization-cooperated gradient feature (NCGF) \cite{liu2007normalization} extraction strategy to jointly extract a 512-dimension feature vector without explicitly normalizing the image to a fixed scale ($64\times 64$ pixels in our method). Linear Discriminant Analysis (LDA) \cite{fisher1936use} is used to reduce the dimensionality of the input features from 512 to 200. It is also worth noting that, before inputting the feature vector into the classifier, we use the power transformation (also known as Box-Cox transformation) strategy \cite{van1997box} to make the feature distribution closer to the Gaussian distribution.
\begin{figure}
\centering
\includegraphics[width=.2\textwidth]{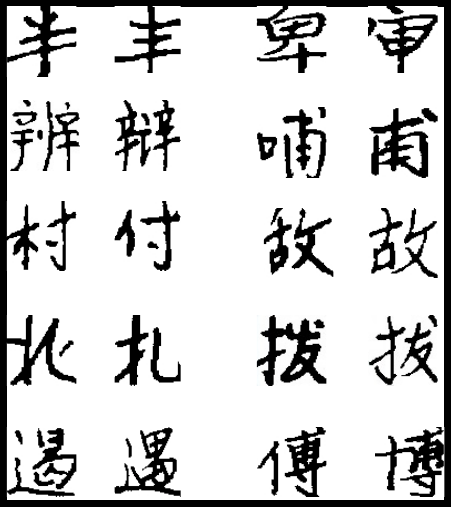}
\caption{Normalized handwritten samples of similar Chinese characters}
\label{sim_pairs}
\vspace{-.8 em}
\end{figure}

\subsection{Parameters Setup}
First of all, we need to build a set of similar Chinese characters in order to train the discriminative SVM. However, there is no standard definition of similar Chinese characters, because \enquote{similar} is an obscure concept heavily depending on human perception. In our approach, we define the similar Chinese characters set in the view of \enquote{machine perception}. The machine (baseline classifier) perceives characters that are often misclassified as similar characters. If the number of times character A is misclassified as character B and the other way around is more than a threshold $T$, A and B are deemed as a pair of similar characters. In our experiments with MQDF, 30 samples are extracted from each class (3755 classes in total) for the test, and MQDF outputs 2383 misclassified items. We obtain a set of similar Chinese character with 1105 pairs by setting $T=2$ . Figure \ref{sim_pairs} gives some examples of our obtained similar Chinese characters set.

For each pair of similar Chinese characters, we train a discriminative SVM to jointly localize the discriminative region and distinguish them. We first normalize each sample image to the scale of $64\times 64$ pixels by employing bi-moment normalization method \cite{liu2003handwritten} and then extract features from sub-windows of each sample image by using our proposed Gradient Context Operator, where we set the parameters $\{r_i\}^4_{i=1}$ of the neighborhood $\Omega$ to be $3, 4, 8$ and 16, respectively. All feature descriptors are projected to the formed visual dictionary and the locality of each sample character is represented by a coarse histogram which counts the number of occurrence of each codeword in each sub-window. Considering the efficiency of our optimization algorithm, we select 9 scales for the sliding sub-window at the space of 4 pixels according to the structures of Chinese characters. The selected scales (width, height) are as follows: {(64, 24), (24, 64), (32, 32), (16, 16), (24, 24), (16, 48), (48, 16), (64, 32), (32, 64)}. $\tau$ in Algorithm \ref{algorithm_outer} is empirically set to be 0.6.

\subsection{Evaluation of the Proposed HCCR System}
In this section, we compare our proposed approach with the traditional MQDF. Before the comparison, we need to configure some parameters in our proposed method first.

Given a well-trained MQDF, we randomly select 30 samples for each class in the CASIA database which contains 112,650 testing samples in total, and feed them to the trained MQDF which outputs the scores of the top two candidates
$(x_1,x_2)$ with an indicator y indicating whether the testing sample is correctly classified ($y=1$ means correct, $y=0$ otherwise). Therefore, we obtain a pool of data $\mathbf{X}􏰁=\{(x^i_1,x^i_2 ,y_i)\}^N_{i=1}$ where $N$ is the number of testing samples, and $\mathbf{X}􏰁$ is used to train a logistic function. In order to combine the two classifiers, namely, MQDF and discriminative SVM, we need to specify an acceptable confidence $\sigma$ for the logistic function. Table I gives the recognition accuracy of our proposed method with different $\sigma$.

\begin{table} 
	\caption{Recognition accuracy for different HCCR system}
	\vspace{-1.8em}
	\begin{center}
	\begin{tabular}{p{2cm}|p{1.5cm} p{1.5cm}} 
	 \hline
	  & MQDF & Ours \\ 
	 \hline
	 Accuracy (\%) & 97.89 & 98.29 \\ 
	 \hline
	\end{tabular}
	\end{center}
	\label{table:hccr}
	\vspace{-1em}
\end{table}

\begin{table}
	\caption{Recognition accuracy of different methods}
	\vspace{-1.8em}
	\begin{center}
	\begin{tabular}{p{2cm}|p{1.5cm} p{2cm} p{1.5cm}} 
	 \hline
	 Method& ASU & MIL & Ours \\ 
	 \hline
	 Accuracy (\%) & 98.19 & 98.00 & 98.29 \\ 
	 \hline
	\end{tabular}
	\end{center}
	\label{tab:shccr} 
	\vspace{-1.8em}
\end{table}

From Table I, we select the optimal acceptable confidence σ for our logistic regression to be 0.96. Moreover, by comparing the recognition accuracy of 􏰂$\sigma=0.96$ and 􏰂$\sigma=1.00$ (working without logistic regression), we can deduce that logistic regression will help boost the recognition accuracy in combining two different classifiers.

With our optimal acceptable confidence determined, we compare our proposed HCCR system with SHCCR component combined with the traditional MQDF. The experimental results are presented in Table \ref{table:hccr} which shows that our proposed discriminative SVM for SHCCR can significantly improve the recognition accuracy of the HCCR system.

\subsection{Evaluation of the SHCCR Component}
Most recognition systems with SHCCR component use a baseline classifier to output similar characters, and similar characters are further classified by a discriminative classifier \cite{xu2010similar,shao2011multiple}. Thus, we evaluate our proposed discriminative SVM for SHCCR by comparing it with the other two competitive methods proposed for SHCCR \cite{xu2010similar,shao2011multiple} over the same predefined set of similar Chinese characters. For comparison, all methods for SHCCR are evaluated under our Logistic Regression based framework. We denote the methods proposed in \cite{xu2010similar} and \cite{shao2011multiple} as ASU and MIL, respectively.

In ASU, the features extracted from all the critical regions are supposed to be fed into a LDA classifier for each pair of similar characters. However, based on our experiments, the calculated covariant matrix may not be positive definite due to the high-dimension feature space and limited training sample for each pair. So in our implementation of ASU, SVM, a native two-class classifier which is free of that constraint, is used to replace LDA.
For each pair of similar characters, MIL requires learning 31 weak classifiers:

\setcounter{equation}{11}
\begin{equation}
\label{adaboost}
h(I,B(x,y,s)) = \left\{ 
  \begin{array}{l}
   1, \quad d(I,B(x,y,s))\times p_w < T_w \times p_w,\\
   -1,\quad otherwise
  \end{array} 
\right.
\end{equation}
where $I$ is an instance, $B(x, y, s)$ represents a small bag, $T_w$ is a threshold and $p_w\in\{1,-1\}$. However, the optimization method for learning the parameters of the weak classifier (\ref{adaboost}) is not specified in \cite{shao2011multiple}. So it is worth noting that we use exhaustive searching to optimize parameters $I, B􏳿(x,y,s)􏴀$ and $p_w$ since they are discrete variables belonging to three finite sets. \textit{Perceptron Learning} algorithm is employed to learn an optimal value of $T_w$ by minimizing the following objective function:
\begin{equation} \label{percept_learn}
f(x)=-\sum_{i\in \Omega}distr(i)\times y_i \times T_w
\end{equation}
where $\Omega$ indexes the misclassified samples, $y_i\in \{1,-1\}$ and $distr(i)$ denote the label and the assigned weight of the $i$th training sample. \textit{Stochastic Gradient Descent} method is used to find a minimal point of $f(x)$. The detailed optimization procedure is presented in Algorithm 3. We list the experimental results in Table \ref{tab:shccr}.

\begin{algorithm} \label{algorithm_percp}
	 Initialize: $T_w=0,k=1,T=20,D=\{(x_i,y_i)|1\le i \le N\}$\\
	 \Repeat {$k\le T$}{
		\For{$1 \le i \le N$}{
	 		$\hat{y_i}=h(I,B(x,y,s))$\\
	 		$T_w=T_w+distr(i)\alpha(y_i-\hat{y_i})$
	 	}
	 	$k=k+1$\;
	 }
 	\caption{Optimization Procedure for (\ref{percept_learn})}
\end{algorithm}

\vspace{-1em}
Obviously, our method outperforms the ASU and MIL with regard to the recognition accuracy of similar Chinese character recognition. ASU first divides each image into $8\times 8=64$ small regions and then determine whether a small region is “critical” according to its Average Symmetric Uncertainty. From our experiments, we found that the critical regions selected by ASU are often dispersive and different with the regions that human may percept as “critical” given a pair of similar Chinese characters. This is probably caused by the highly variant writing styles in handwritten Chinese characters. While MIL cannot localize a certain critical region for each similar pair since each weak classifier holds a “critical region” and they are not the same with each other in each similar pair. However, our method can explicitly select a critical region for each pair of similar characters and it is showed from our experiments that the selected critical region generally complies with the human perception of “critical”. Several examples of discriminative region localization using ASU, MIL and our method are presented in Figure \ref{DR}. We list the experimental results in Table \ref{tab:shccr}.

Since our approach treats all sub-windows in the negative samples as negative, so no detection of discriminative region in negative samples should be presented here.
\begin{figure}
\centering
\includegraphics[width=.43\textwidth]{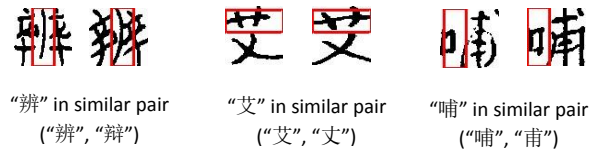}
\caption{Localization of discriminative region of pairs of similar Chinese characters}
\label{DR}
\vspace{-.8 em}
\end{figure}

\section{Conclusion}
Focusing on SHCCR, we propose to use the weakly labeled data to learn a SVM which localizes the regions with highly discriminative ability of similar characters and makes the classification simultaneously. To make our method more robust to SHCCR problem, we also propose a novel SIFT-alike feature descriptor with which we do not need to constrain the scale of the sliding window, thus improves the recognition accuracy. The experimental results show that our method is superior to several competitive approaches and improves the accuracy of handwritten character recognition. However, our method can still be improved from several aspects. First of all, during training the discriminative SVM, the fact that the discriminative regions of two similar characters generally lie in the same part of the two characters can be utilized to accelerate the optimization process. Moreover, the \enquote{unconstrained} property (no constraints are put on the scale of the sliding window) of our SIFT-alike feature is not fully exploited in this paper, it can be used to further improve the performance of our method. Therefore, our future work will focus on these aforementioned points.

\section{Acknowledgment}
This work was financially supported by National Natural Science Foundation of China (61173086). The authors would like to thank Zhenming Zhang for helping with implementing ASU algorithm, as well as the reviewers and editors for their advices.

\bibliographystyle{abbrv}
\bibliography{paper}

\appendix

\subsection{Proof of Theorem \ref{convergence_theorem}}
\label{proof-theorem-1}
\begin{proof}
\label{proof_convergence}
At the $k$-th iteration ($k > 1$) of Algorithm \ref{algorithm_outer}, it holds that $g_i (x_k^{\ast}, {\mu}^{\ast}_{k - 1}) \leq g_i (x_{k - 1}^{\ast}, {\mu}^{\ast}_{k -
1})$, where $x_k^{\ast} = \arg \underset{x \in \Psi (d^+_i)}{\min} \{ -
(\omega^{\ast}_{k - 1})^T x - b_{k - 1} \}$. Let $obj(k)$ denote
the objective value at the $k$-th iteration. Therefore, we have
\begin{eqnarray}
   {obj} (k) &  = & f ({\mu}^{\ast}_k) + C \overset{n}{\underset{i = 1}{\sum}} g_i^{} 
  (x^{\ast}_k, {\mu}^{\ast}_k) + C \overset{m}{\underset{i = 1}{\sum}}
  h^{}_i (y_k^{\ast}, {\mu}_k^{\ast}) \nonumber  \\
  \label{first_optimization}
  & \leq & f ({\mu}_{k - 1}^{\ast}) + C \overset{n}{\underset{i = 1}{\sum}}
  g_i^{} (x_k^{\ast}, {\mu}^{\ast}_{k - 1}) \nonumber \\ & &  + C \overset{m}{\underset{i
  = 1}{\sum}} h^{}_i (y_{k - 1}^{\ast}, {\mu}_{k - 1}^{\ast})  \\
  \label{second_optimization}
  & \leq & f ({\mu}_{k - 1}^{\ast}) + C \overset{n}{\underset{i = 1}{\sum}}
  g_i^{} (x_{k - 1}^{\ast}, {\mu}^{\ast}_{k - 1}) \nonumber \\ & & + C
  \overset{m}{\underset{i = 1}{\sum}} h^{}_i (y_{k - 1}^{\ast}, {\mu}_{k -
  1}^{\ast})  \\ 
   & = & {obj} (k - 1)  
\end{eqnarray}

Inequality \eqref{first_optimization} is due to Step \ref{optimization_algorithm_convex} and Inequality \eqref{second_optimization} is because of Step \ref{K-means-type}. So this means that after each iteration, the objective of $P 1$ along with the sequences generated by the algorithm above is non-increasing. In addition, it's straightforward to show that $\tmop{obj} (k)$ is bounded below by $0$. Therefore, we conclude that our algorithm converges to the local optimum.
\end{proof}

\subsection{Proof of Lemma \ref{lemma_2}}
\label{proof-lemma-2}
\begin{proof}
To prove Lemma \ref{lemma_2}, we only need to deal with the last term in Equation \eqref{subgradient_1}. Fix a $\mu$, if $\underset{y \in \Psi (d^-_j)}{\max} \{\omega^T y + b\} \leq -1$, $\frac{\partial{h_i(\mu)}}{\partial{\mu}} = 0$. In contrast, if $\underset{y \in \Psi (d^-_j)}{\max} \{\omega^T y + b\} > -1$, $h_j(\mu) = \underset{y \in \Psi (d^-_j)}{\max} \{\omega^T y + b\}$. That is, 
for each $\mu$ and $y_j^{\ast} = \arg \underset{y \in \Psi (d^-_j)}{\max} \{\omega^T y + b\}$, we have for every $\mu^{'}$:
\begin{eqnarray*}
  h_j^{} ({\mu}^{'}) & \geqslant & (w^{'})^T y^{\ast} + b^{^{'}}\\
  & = & w^{T} y + b  \noplus + (w^{'} - w)^T y^{\ast} + (b^{'} - b)\\
  & = & h_j ({\mu}) + (w^{'} - w^{})^T y^{\ast} + (b^{'} - b)
\end{eqnarray*}
It immediately follows that one subgradient of $h_j^{} ({\mu})$ is $\frac{\partial h_i ({\mu})}{\partial {\mu}} = \left(\begin{array}{c}
     y^{_{} \ast}_j\\
     1
   \end{array}\right)$.
This completes the proof. 
\end{proof}

\subsection{Proof of Theorem \ref{theorem_stepsize}}
\begin{proof}
It follows that $\sum_{k}\alpha_k = \infty$ and $\sum_{k}\alpha_k^2 < 2$. Based on Proposition 8.2.6 of \cite{convex-analysis}, the sequences $\{\mu_k\}$ generated by Algorithm \ref{algorithm_2} converge to some optimal solutions for solving Equation \eqref{function_of_mu}. Then the sequence $\{\mu_k,y_{j,k}\}$ determined by Step (3) and (4) in Algorithm \ref{algorithm_2} must also converge to one global optimal solution. This completes the proof.
\end{proof}

\end{document}